\definecolor{PastelPink}{HTML}{EECCD3}
\definecolor{Teal}{HTML}{80C4B7}
\definecolor{ButterYellow}{HTML}{EEC95C}
\definecolor{PapayaOrange}{HTML}{E3856B}
\theoremstyle{remark}
\theoremstyle{definition}
\newtheorem{definition}{Definition}%[section]
\DeclareRobustCommand{\eg}{e.g.,\@\xspace}
\DeclareRobustCommand{\ie}{i.e.,\@\xspace}
\DeclareRobustCommand{\wrt}{w.r.t.\@\xspace}
\definecolor{codegreen}{rgb}{0,0.6,0}
\definecolor{codegray}{rgb}{0.5,0.5,0.5}
\definecolor{codepurple}{rgb}{0.58,0,0.82}
\definecolor{backcolour}{rgb}{0.95,0.95,0.92}
\lstdefinelanguage{PDDL}{
  sensitive = true,
  keywords = [1]{domain, define, types, requirements, predicates, action, parameters, precondition, effect},
  %ndkeywords={int, \char{}  },
  %ndkeywordstyle=\color{blue}\bfseries,
  morekeywords=[2]{and, or, not, exists, forall}
  }
\title{Projection Abstractions in Planning Under the Lenses of Abstractions for MDPs}
\author{
    %Authors
   Giuseppe Canonaco \textsuperscript{\rm 1},
   Alberto Pozanco \textsuperscript{\rm 1},
   Daniel Borrajo \textsuperscript{\rm 1}
}
\title{My Publication Title --- Single Author}
\author {
    Author Name
}
\title{My Publication Title --- Multiple Authors}
\author {
    % Authors
    First Author Name\textsuperscript{\rm 1},
    Second Author Name\textsuperscript{\rm 2},
    Third Author Name\textsuperscript{\rm 1}
}
\begin{document}
%% Comments
\newcommand{\alberto}[1]{\textcolor{blue}{#1}}
\newcommand{\giuseppe}[1]{\textcolor{orange}{#1}}
\newcommand{\daniel}[1]{\textcolor{red}{#1}}
\maketitle

\begin{abstract}
The concept of abstraction has been independently developed both in the context of AI Planning and discounted Markov Decision Processes (MDPs). However, the way abstractions are built and used in the context of Planning and MDPs is different even though lots of commonalities can be highlighted. To this day there is no work trying to relate and unify the two fields on the matter of abstractions unraveling all the different assumptions and their effect on the way they can be used. Therefore, in this paper we aim to do so by looking at projection abstractions in Planning through the lenses of discounted MDPs. Starting from a projection abstraction built according to Classical or Probabilistic Planning techniques, we will show how the same abstraction can be obtained under the abstraction frameworks available for discounted MDPs. Along the way, we will focus on computational as well as representational advantages and disadvantages of both worlds pointing out new research directions that are of interest for both fields.
\end{abstract}

% Uncomment the following to link to your code, datasets, an extended version or similar.
%
% \begin{links}
%     \link{Code}{https://aaai.org/example/code}
%     \link{Datasets}{https://aaai.org/example/datasets}
%     \link{Extended version}{https://aaai.org/example/extended-version}
% \end{links}

\section{Introduction}
\label{sec:intro}
Abstractions are tools that help agents to simplify their reasoning when solving problems. In the context of AI Planning this translates into automatically building a new, simpler problem that can be solved fast. This solution is used to guide and can possibly accelerate the search for a solution in the original task, representing the true goal of the agent. The most common Planning abstractions are defined through aggregation functions that disregard a subset of the state-space components, \ie projections~\cite{edelkamp2002symbolic,helmert2007flexible}. Furthermore, these kinds of abstractions usually assume that the planning domain model does not contain actions with conditional effects. If that is not the case, the planning domain model can be re-compiled in order to satisfy this assumption. On the one hand, this implies that abstractions can be computed directly from the implicit definition of the planning task without the need of explicitly representing the transition graph, consequently providing a significant computational advantage. On the other hand, it constraints us to problems that are not heavy on conditional effects, otherwise the number of actions would scale with the state space size and a (P)PDDL~\cite{ghallab1998pddl, younes2004ppddl1} representation of the task may not be feasible in practice (\eg Atari $2600$ Games~\cite{bellemare2013arcade}). %\giuseppe{@Alberto, in the above we need references and some more sentences.} 

Abstractions are not confined only to the Planning field; they have, indeed, concurrently emerged in the context of discounted Markov Decision Processes (MDPs)~\cite{puterman2014markov}. A discounted MDP is the most commonly used model in the context of Reinforcement Learning (RL)~\cite{sutton2018reinforcement}, a field that tackles sequential decision making problems analogous to the ones addressed by AI Planning. Here, abstractions have the same overarching objective as in Planning, \ie  reducing the state space size of the original problem to transform the search for a solution into an easier task. However, techniques in this field commonly work on the aggregated state space producing a solution to be used directly on the original problem. This is in contrast \wrt Planning abstractions that, instead, usually guide the search for a solution through the computation of a possibly admissible heuristic function. Furthermore, stochastic processes over the aggregated state space are Partially Observable MDPs (POMDPs)~\cite{bai2016markovian} and a Markovian approximation of a non-Markovian process may result in poor performance. There are three abstraction frameworks within discounted MDPs: Weighting Function Abstractions (WFAs)~\cite{li2006towards}; Abstract Robust MDPs (ARMDPs)~\cite{petrik2014raam}; and Abstract Bounded Parameter MDPs (ABPMDPs)~\cite{givan2000bounded}. These techniques make no assumption on the aggregation function, but build abstractions via leveraging the transition and reward functions of the original task. This implies the utmost flexibility in defining how states of the original problem get aggregated into abstract states, but it is computationally intensive. Finally, the abstraction frameworks for discounted MDPs do not make any assumption about the conditional effects of actions. This, as already mentioned, comes at the cost of increased computational complexity, even though it allows more expressive power.

Considering the above discussion, there are many commonalities and differences for the same concept between the two fields. Therefore, in this paper, we will relate and unify projection abstractions in Planning (Classical and Probabilistic) with abstractions as they are conceived for discounted MDPs. Given a Planning abstraction, we will show how to obtain it under the lenses of the three abstraction frameworks we mentioned above, making explicit the role of the various different assumptions. This also constitutes a first step towards bridging the gap between AI Planning and RL in the context of abstractions.

\section{Background}\label{sec:background}
A common way tackle sequential decision making problems is through Planning techniques. A STRIPS~\cite{fikes1971strips} Planning task is defined as a tuple $P = \langle F, \mathcal{A}, I, G, C\rangle$, where $F$ is the set of propositions, $\mathcal{A}$ is the set of actions, $I \subseteq F$ is the initial state, $G \subseteq F$ represents the goal to be achieved, %\ie the components of $s$ that need to become true or false, 
and $C: \mathcal{A} \rightarrow \mathbb{N}_1^+$ is the function associating a cost to each action. Every action $a \in \mathcal{A}$ is defined by a set of preconditions $\textsc{pre}\xspace(a)$ and effects $\textsc{eff}_a\xspace(e)$ with $e \sim P_a(p_{a,1},\dots,p_{a,n_a})$ a categorical distribution over the $n_a$ possible effects associated to action $a$. If $n_a = 1$ for each $a$ then we are in the Classical Planning setting and we drop the dependency on $e$ referring to the effects as $\textsc{eff}\xspace(a)$. The effects of an action are split into $\textsc{del}_a\xspace(e)$ and $\textsc{add}_a\xspace(e)$, and, in a set-based representation, given a state $s\in \mathcal{S}$ the next state $s' = (s\setminus\textsc{del}_a\xspace(e))\cup\textsc{add}_a\xspace(e)$. A binary vector representation of the state can be constructed as well using the propositions in $F$ as components of a state $s\in \mathcal{S}$ (overloading the notation for simplicity's sake, it will be clear when referring to one or the other from the context). In this case, we will represent the effects of an action $a$ as $\textsc{eff}_a\xspace(e) = f_a(e)$ in the case of Probabilistic Planning and $\textsc{eff}\xspace(a) = f(a)$ in the case of Classical Planning. In both formulations an action is applicable in state $s$ iff $\textsc{pre}\xspace(a)$ is satisfied in $s$. In the binary vector representation, applying action $a$ in state $s$,  we obtain $s' = s + \textsc{eff}_a\xspace(e)$, where add effects are intended as adding $1$ to the respective component of $s$ and delete effects subtract $1$ instead.\footnote{It is not allowed to the state space components to go below $0$ or above $1$. This can be formalized introducing a clamping function. Since $\alpha(clamp(s)) = clamp(\alpha(s))$, where $\alpha$ is a projection, it does not affect the derivations in this work, hence it has not been formalized into the notation to avoid cluttering.} A policy $\pi$ is a decision rule telling us what action to execute given the state we are in (it may be stochastic or deterministic). It constitutes a solution to the planning problem $P$ if, starting from the initial state, it reaches the goal with probability $1$ as the interactions tend to infinity. The policy is optimal if its expected cost is minimal. If $P$ is deterministic, then the concept of plan suffices. A plan is defined as a sequence of actions $\varpi = (a_1, \dots, a_n)$ and it is applicable in state $s_0$ if there exists a sequence of states $(s_1, \dots, s_n)$ such that $a_t$ is applicable in $s_{t-1}$ and $s' = (s\setminus\textsc{del}\xspace(a))\cup\textsc{add}\xspace(a)$ (or $s_t = s_{t-1} + f(a_t)$ in a binary vector representation). A plan is a solution to $P$ if it is applicable in $I$ and $G$ is satisfied in $s_n$. Furthermore, it is optimal if its cost, defined as $C(\varpi) = \sum_{t=1}^n C(a_t)$, is minimal among all alternative plans costs.

Another formalism to model sequential decision making problems is represented by discounted MDPs. A discounted MDP is defined by a tuple $\mathcal{M} = (\mathcal{S}, \mathcal{A}, \mathcal{T}, \mathcal{R}, \gamma, \rho)$, where $\mathcal{S}$ is the state space, $\mathcal{A}$ is the action space, $\mathcal{T}(s'|s, a)$ is the Markovian state transition function (a probability distribution over the next state given that we applied action $a$ in state $s$), $\mathcal{R}$ is the reward function, $\gamma \in [0, 1)$ is the discount factor, and $\rho$ is the initial state distribution. An optimal solution, in this setting, is represented by a policy $\pi$ maximizing the expected cumulative discounted reward $\mathbb{E}_{\tau \sim p_{\pi}}[\mathcal{R}(\tau)]$, where $\mathcal{R}(\tau) = \sum_{t=0}^\infty \gamma^t\mathcal{R}(s_t, a_t)$, $\tau=\langle s_0, a_0, s_1, \dots\rangle$ is a trajectory, and $p_{\pi}$ is the distribution over trajectories induced by the policy $\pi$, the transition function $\mathcal{T}$, and the initial state distribution $\rho$.

In both the above described models, state-space abstractions may be formalized  via an aggregation function $\alpha: \mathcal{S}\rightarrow\mathcal{\Bar{S}}$, where $\mathcal{S}$ is the state space of the problem at hand, $\mathcal{\Bar{S}}$ is the abstract state space, and $|\mathcal{\Bar{S}}|<|\mathcal{S}|$. The reduction in the state space size is what is supposed to give the computational advantage.

\subsection{Abstractions in Planning}\label{sec:background_planning}

Although there exist more general frameworks to build planning abstractions such as Cartesian abstractions~\cite{seipp2018counterexample} or Merge-and-shrink~\cite{helmert2007flexible}, in this work we will focus on projections, which form the basis for Pattern Database (PDB) heuristics~\cite{culberson1998pattern,edelkamp2002symbolic}.\footnote{We formalize projection abstractions and PDBs in terms of STRIPS due to easiness of exposition and derivation of results.
However, a multi-valued state variable encoding can represent PDBs much more compactly.}
\begin{definition}[Projection Abstractions]
    Let $P = \langle F, \mathcal{A}, I, G, C\rangle$ be a planning task. 
    Let $\Bar{F} \subseteq F$ be a pattern.
    A projection $\Bar{F}$ of the task $P$ generates an abstract task $\Bar{P} = \langle \Bar{F}, \Bar{A}, \Bar{I}, \Bar{G} \rangle$ where:
    \begin{itemize}
        \item $\Bar{I} = I \cap \Bar{F}$
        \item $\Bar{G} = G \cap \Bar{F}$
        \item $\Bar{A} = \bigcup_{a \in \mathcal{A}} \langle \textsc{pre}\xspace(a) \cap \Bar{F}, \textsc{eff}\xspace(a) \cap \Bar{F} \rangle$
    \end{itemize}
\end{definition}

In the case of a binary vector representation, the above intersection operation that gives us the abstraction can be substituted with a projection $\alpha(s) = \Bar{s}$, where $\Bar{s}$ is produced by keeping only the components within the pattern $\Bar{F}$.

PDBs are obtained by abstracting all but a part of the problem (the \emph{pattern}), yielding a problem that is small enough to be solved optimally for every state.
The results are computed offline and stored in a table called the \emph{pattern database}.
During search, the heuristic estimate of the given state is computed by accessing the value of the abstract state in the \emph{pattern database}.
Heuristics estimates from different abstractions can be combined by taking their maximum, or, under certain conditions, their sum~\cite{haslum2007domain}.

\subsection{Abstractions in MDPs}\label{sec:background_mdps}
We will now review the main abstraction frameworks for discounted MDPs: WFAs, ARMDPs, and ABPMDPs.

Given an aggregation function $\alpha: \mathcal{S}\rightarrow\mathcal{\Bar{S}}$ that arbitrarily aggregates states of the original problem into abstract states, a set-valued function $\alpha^{-1}:\Bar{S} \rightarrow 2^{\mathcal{S}}$ defining the abstract class for any abstract state $\Bar{s}\in\Bar{S}$ as $\alpha^{-1}(\Bar{s}) = \{s\in \mathcal{S}: \alpha(s) = \Bar{s}\}$, and a probability distribution $\omega_{\Bar{s}}: \alpha^{-1}(\Bar{s}) \rightarrow [0, 1]$ for any abstract state $\Bar{s}$, then:
\begin{restatable}[Weighting Function Abstractions]{definition}{Weighting Function Abstractions}\label{def:wfa}
Given an MDP $\mathcal{M} = (\mathcal{S}, \mathcal{A}, \mathcal{T}, \mathcal{R}, \gamma, \rho)$, a  WFA is a new MDP $\mathcal{M}_\omega=(\Bar{\mathcal{S}}, \mathcal{A}, \mathcal{\Bar{T}}_\omega, \mathcal{\Bar{R}}_\omega, \gamma)$ such that the transitions and the reward functions are defined in the following way \wrt $\mathcal{M}$:
\begin{align*}
    \mathcal{\Bar{T}}_\omega(\Bar{s}'|\Bar{s}, a) &= \sum_{s\in \alpha^{-1}(\Bar{s})}\omega_{\Bar{s}}(s)\sum_{s'\in \alpha^{-1}(\Bar{s}')}\mathcal{T}(s'|s, a)\\
    \mathcal{\Bar{R}}_\omega(\Bar{s}, a) &=  \sum_{s\in \alpha^{-1}(\Bar{s})}\omega_{\Bar{s}}(s)\mathcal{R}(s, a)
\end{align*}
\end{restatable}
A WFA performs a weighted average of all the transition probabilities (or the rewards, respectively) associated to the states in the abstract class represented by $\Bar{s}$. The weighting function is the same for all the actions applicable in $\Bar{s}$. For what concerns ARMDPs, we have to define a richer weighting function $\xi_{\Bar{s}, a}:\alpha^{-1}(\Bar{s}) \rightarrow [0,1]$, then:

\begin{restatable}[Abstract Robust MDPs]{definition}{Abstract Robust MDPs}\label{def:armdps}
Given an MDP $\mathcal{M} = (\mathcal{S}, \mathcal{A}, \mathcal{T}, \mathcal{R}, \gamma, \rho)$, an ARMDP is a new MDP $\mathcal{M}_\xi=(\Bar{\mathcal{S}}, \mathcal{A}, \mathcal{\Bar{T}}_\xi, \mathcal{\Bar{R}}_\xi, \gamma)$ such that the transition and reward functions are defined as:
\begin{align*}
    \mathcal{\Bar{T}}_\xi(\Bar{s}'|\Bar{s}, a) &= \sum_{s\in \alpha^{-1}(\Bar{s})}\xi_{\Bar{s}, a}(s)\sum_{s'\in \alpha^{-1}(\Bar{s}')}\mathcal{T}(s'|s, a)\\
    \mathcal{\Bar{R}}_\xi(\Bar{s}, a) &=  \sum_{s\in \alpha^{-1}(\Bar{s})}\xi_{\Bar{s}, a}(s)\mathcal{R}(s, a)
\end{align*}
\end{restatable}
Thanks to a more expressive weighting function that now will be different for any action applicable in an abstract state, ARMDPs can provide better approximations of the original MDP. Finally:

\begin{restatable}[Abstract Bounded Parameter MDPs]{definition}{Abstract Bounded Parameter MDPs}\label{def:abpmdps}
An ABPMDP is a family of MDPs $\mathcal{M}_I=(\Bar{\mathcal{S}}, \mathcal{A}, \mathcal{\Bar{T}}_I, \mathcal{\Bar{R}}_I, \gamma)$ such that the transition and reward functions are defined as:
\begin{align*}
    \mathcal{\Bar{T}}_I(\Bar{s}'|\Bar{s}, a) &= \Big[\min_{s\in \alpha^{-1}(\Bar{s})}\sum_{s'\in \alpha^{-1}(\Bar{s}')}\mathcal{T}(s'|s, a),\\ &\quad\qquad\qquad\max_{s\in \alpha^{-1}(\Bar{s})}\sum_{s'\in \alpha^{-1}(\Bar{s}')}\mathcal{T}(s'|s, a)\Big]\\
    \mathcal{\Bar{R}}_I(\Bar{s}, a) &= \left[\min_{s\in \alpha^{-1}(\Bar{s})}\mathcal{R}(s, a), \max_{s\in \alpha^{-1}(\Bar{s})}\mathcal{R}(s, a) \right]
\end{align*}
\end{restatable}

This last abstraction framework provides a family of possible abstract MDPs selecting transition and reward functions within the specified intervals. Solving this family of abstractions means searching for the optimal policies in the least and most favorable MDP belonging to the family. Finally, as shown by~\citealp{congeduti2022cross}, $\text{WFAs}\subseteq\text{ARMDPs}\subseteq\text{ABPMDPs}$.

\section{The Relation Between Abstractions in Classical Planning and MDPs}\label{sec:abstractions}
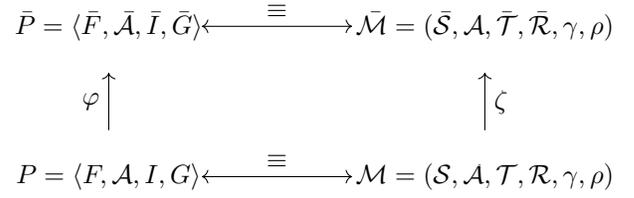
\begin{figure}
    \vspace{-0.6in}
    \centering
    \begin{tikzpicture}
        \node[shape=circle,draw=white, minimum size=70pt, yscale=0.5, draw opacity=0] (A) at (-2.5,-2) {};
        \node[shape=circle,draw=white, minimum size=70pt, yscale=0.5, draw opacity=0] (B) at (-2.5,0) {};
        \node[shape=circle,draw=white, minimum size=100pt, yscale=0.35, draw opacity=0] (C) at (2.5,-2) {};
        \node[shape=circle,draw=white, minimum size=100pt, yscale=0.35, draw opacity=0] (D) at (2.5,0) {};
        \node[shape=circle,draw=white] (AA) at (-2.5,-2) {$P = \langle F, \mathcal{A}, I, G\rangle$};
        \node[shape=circle,draw=white, draw opacity=0] (BB) at (-2.5,0) {$\Bar{P} = \langle \Bar{F}, \mathcal{\Bar{A}}, \Bar{I}, \Bar{G}\rangle$};
        \node[shape=circle,draw=white, draw opacity=0] (CC) at (2.5,-2) {$\mathcal{M} = (\mathcal{S}, \mathcal{A}, \mathcal{T}, \mathcal{R}, \gamma, \rho)$};
        \node[shape=circle,draw=white, draw opacity=0] (DD) at (2.5,0) {$\mathcal{\Bar{M}} = (\mathcal{\Bar{S}}, \mathcal{A}, \mathcal{\Bar{T}}, \mathcal{\Bar{R}}, \gamma, \rho)$};
        \path [->] (A) edge node[left] {$\varphi$} (B);
        \path [->] (C) edge node[right] {$\zeta$} (D);
        \path [<->] (A) edge node[above] {$\equiv$} (C);
        \path [<->] (B) edge node[above] {$\equiv$} (D);
    \end{tikzpicture}
    \vspace{-0.6in}
    \caption{Relationship between Planning and MDP \wrt the original problem description and the abstraction. The functions $\varphi$ and $\zeta$ transform a problem into its abstraction in the planning and MDP domains, respectively.}
    \label{fig:planning2mdp_equivalence}
\end{figure}
After introducing the necessary background, we can turn our attention to the relationship between abstractions in Classical Planning and MDPs. Starting from two equivalent problems, one represented as a Classical Planning task and another one as an MDP, and given an abstraction over the planning task, we will discuss how to obtain the equivalent abstraction, if feasible, in the context of the MDP formalism under the lenses of three different abstraction frameworks: WFAs, ARMDPs, and ABPMDPs  (see Figure~\ref{fig:planning2mdp_equivalence}).

If there are no conditional effects in our planning domain, then given a starting state $s$ where an action $a$ is applicable, the next state $s'$ will have the following form: $s' = s + f(a)$, where $f(a)$ represents the effects of action $a$. They are state independent due to the absence of conditional effects. Additionally, the abstraction in the planning domain may be defined through a projection $\alpha$ over the state space. This implies that $\alpha(s') = \alpha(s+f(a)) = \alpha(s) + \alpha(f(a))$. Therefore, the following property holds:

\begin{restatable}[Absence of ambiguity in the abstraction transition graph]{proposition}{Ambiguity Absence in the Abstraction Transition Graph}\label{prop:ambiguity_absence}
$\forall~s_1,s_2 \in \alpha^{-1}(\Bar{s})$ such that $a$ is applicable in $s_1$ and $s_2$, then $\alpha(s'_{s_1,a}) = \alpha(s'_{s_2,a}) = \Bar{s}'$.
\end{restatable}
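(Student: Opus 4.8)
The plan is to use the key structural fact established immediately before the proposition: in the absence of conditional effects, applying an action $a$ in a state $s$ yields $s' = s + f(a)$ with $f(a)$ independent of $s$, and since $\alpha$ is a projection (a coordinate-restriction map), it is additive, so $\alpha(s') = \alpha(s) + \alpha(f(a))$. Everything reduces to this linearity plus the definition of $\alpha^{-1}(\bar s)$.

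First I would fix $\bar s \in \bar{\mathcal S}$ and take arbitrary $s_1, s_2 \in \alpha^{-1}(\bar s)$ in which $a$ is applicable; by definition of the preimage, $\alpha(s_1) = \alpha(s_2) = \bar s$. Then I would write out the successors $s'_{s_1,a} = s_1 + f(a)$ and $s'_{s_2,a} = s_2 + f(a)$ using the no-conditional-effects assumption. Applying $\alpha$ and using additivity of the projection gives $\alpha(s'_{s_1,a}) = \alpha(s_1) + \alpha(f(a)) = \bar s + \alpha(f(a))$ and likewise $\alpha(s'_{s_2,a}) = \bar s + \alpha(f(a))$, so the two abstract successors coincide; calling this common value $\bar s'$ finishes the argument. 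I would also note in one sentence that the clamping subtlety mentioned in the footnote does not interfere, since $\alpha(\mathrm{clamp}(s)) = \mathrm{clamp}(\alpha(s))$, so the identity survives the projection either way.

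The proof is essentially immediate once the preparatory observation ($\alpha$ commutes with adding state-independent effects) is in hand, so there is no real obstacle; the only thing to be careful about is to state clearly why $\alpha(f(a))$ is the \emph{same} object in both computations — namely that $f(a)$ depends on $a$ alone and not on the state, which is exactly what ``no conditional effects'' buys us. If I wanted to make the write-up fully self-contained I would spend an extra line justifying that a projection onto the coordinates in $\bar F$ is additive on the binary-vector representation, but since the excerpt already asserts $\alpha(s + f(a)) = \alpha(s) + \alpha(f(a))$ just above the statement, I would simply cite that and keep the proof to a few lines.
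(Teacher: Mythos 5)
Your proposal is correct and follows essentially the same argument as the paper: fix $s_1,s_2\in\alpha^{-1}(\Bar{s})$, write the successors as $s_i+f(a)$ using state-independence of the effects, and apply linearity of the projection to get $\alpha(s_i)+\alpha(f(a))=\Bar{s}+\alpha(f(a))$ for both. The extra remark about clamping is a nice touch but not needed beyond what the paper's footnote already covers.
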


\begin{proof}
    Since $s_1,s_2 \in \alpha^{-1}(\Bar{s})$, then $\alpha(s_1) = \alpha(s_2) = \Bar{s}$. Additionally, $s'_{s_1,a} = s_1 +f(a)$ and $s'_{s_2,a} = s_2 +f(a)$. This implies:
    \begin{align*}
        \alpha(s'_{s_1,a}) &= \alpha(s_1)+\alpha(f(a)) = \Bar{s} +\alpha(f(a)) \\ 
        &= \alpha(s_2) + \alpha(f(a)) = \alpha(s'_{s_2,a}) = \Bar{s}'.
    \end{align*}
\end{proof}

The above property avoids non-deterministic transitions in the abstract transition graph, where, starting from a state $\Bar{s}$ and applying an action $a$, we may move both to $\Bar{s}'$ and $\Bar{s}''$. An example is reported in Figure~\ref{fig:ambiguity}. While the property reported in Proposition~\ref{prop:ambiguity_absence} is always assumed in the context of abstractions for Classical Planning, it is never considered in the context of abstractions for MDPs. It is usually assumed that all actions are applicable in any state. In order to grant this property, the MDP can be recompiled into an equivalent one, where each action that has conditional effects gets remapped into a set of different actions, one per each conditional effect. In the worst case, the recompiled MDP will have $|\mathcal{A}||\mathcal{S}|$ actions. However, in the context of this paper, the MDP under analysis already satisfies this property, because it is equivalent to the starting planning problem. This equivalence implies that, in any given state of the MDP, only a subset of the action space will be applicable.

%Using projections as aggregation functions for abstractions on the planning side has an another important property related to action applicability in any of the original states $s$ belonging to an abstract state $\Bar{s}$.
%\begin{restatable}[Action Applicability]{proposition}{Action Applicability}\label{prop:action_applicability}
%Given $s_1 \in \alpha^{-1}(\Bar{s})$, if $a$ is applicable in $s_1$ then $a$ is applicable in any other $s_2 \in \alpha^{-1}(\Bar{s})$.
%\end{restatable}
%\begin{proof}
%    We have two cases to consider: $\alpha$ does not impact the conditions that make $a$ applicable; or it does.
%    In the first case, the precondition set of $a$ is satisfied in any state $s\in \alpha^{-1}(\Bar{s})$
%
%\end{proof}
%
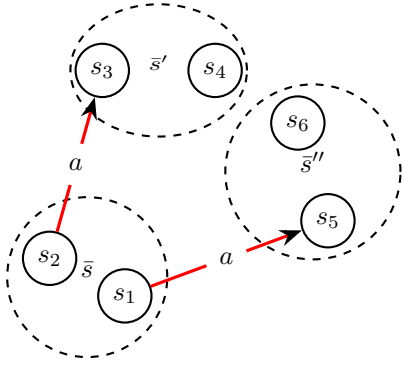
\begin{figure}[!tbh]
    \centering
    \begin{tikzpicture}
        \begin{scope}[every node/.style={circle,thick,draw}]
            \node (A) at (0,0) {$s_1$};
            \node (B) at (-1,0.5) {$s_2$};
           
            \node (C) at (-0.3,3) {$s_3$};
            \node (D) at (1.2, 3) {$s_4$};
            
            \node (E) at (2.7,1) {$s_5$};
            \node (F) at (2.3,2.3) {$s_6$};

            \node (S12) [rotate=0][draw,dashed,inner sep=0pt, circle,yscale=1,fit={(A)(B)}] {$\Bar{s}$};
            \node (S1N) [rotate=0][draw,dashed,inner sep=0pt, circle,yscale=0.75,fit={(C)(D)}] {$\Bar{s}'$};
            \node (S2N) [rotate=0][draw,dashed,inner sep=0pt, circle,yscale=1,fit={(E)(F)}] {$\Bar{s}''$};
        \end{scope}

        \begin{scope}[>={Stealth[black]},
                      every node/.style={fill=white,circle},
                      every edge/.style={draw=red,very thick}]
            \path [->] (B) edge node {$a$} (C);
            \path [->] (A) edge node {$a$} (E);
        \end{scope}
    \end{tikzpicture}
    \caption{Non-deterministic transitions induced in the abstraction's transition graph by actions with conditional effects.}
    \label{fig:ambiguity}
\end{figure}

In order to obtain an abstraction equivalent to the one in the Classical Planning context, we need to require the abstraction to be connection preserving and deterministic. Connection preserving intuitively means that if there is a connection between two different abstract classes in the original MDP then there is a connection between the relative abstract states in the abstraction through the same action. Let us dive into a formal definition:

\begin{restatable}[Connection Preserving]{definition}{Connection Preserving Abstraction}\label{def:connection_preserving}
Given and MDP $\mathcal{M}$, and $C^a_{\Bar{s}, \Bar{s}'}=\{s: s\in \alpha^{-1}(\Bar{s}), \sum_{s'\in \alpha^{-1}(\Bar{s}')}\mathcal{T}(s'|s, a)>0\}$, an abstraction is connection preserving whenever $C^a_{\Bar{s}, \Bar{s}'}\neq \emptyset \iff \mathcal{\Bar{T}}(\Bar{s}'|\Bar{s},a)>0$ and $C^a_{\Bar{s}, \Bar{s}'} = \emptyset \iff \mathcal{\Bar{T}}(\Bar{s}'|\Bar{s},a) = 0$.
\end{restatable}
The above property is of interest for abstractions in general, because it produces an abstraction that faithfully represents connections between abstract classes of the underlying MDP.
\begin{restatable}[Deterministic Abstraction]{definition}{Deterministic Abstraction}\label{def:deterministic}
An abstraction is deterministic if and only if: 
\begin{equation*}
    \mathcal{\Bar{T}}(\Bar{s}'|\Bar{s},a) =     \begin{cases}
      1 \\
      0
    \end{cases}
\end{equation*}
\end{restatable}
All the three different frameworks to construct abstractions for MDPs are connection preserving under some constraints over $\omega$, $\xi$, or $I$. 
\begin{restatable}[Connection Preserving WFAs]{proposition}{Connection Preserving WFAs}\label{prop:connection_preserving_wfa}
WFAs are connection preserving whenever $\forall \Bar{s}, \Bar{s}', a$ such that $C_{\Bar{s}, \Bar{s}'}^a\neq\emptyset$ there exists $s \in C_{\Bar{s}, \Bar{s}'}^a$ for which $\omega_{\Bar{s}}(s)>0$.
\end{restatable}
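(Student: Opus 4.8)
The plan is to unfold the WFA transition definition and observe that, because every summand is a product of non-negative quantities, the sign of $\mathcal{\Bar{T}}_\omega(\Bar{s}'|\Bar{s},a)$ is governed entirely by whether any state in $C^a_{\Bar{s},\Bar{s}'}$ receives positive weight. Concretely, I would first note that for $s\notin C^a_{\Bar{s},\Bar{s}'}$ the inner sum $\sum_{s'\in\alpha^{-1}(\Bar{s}')}\mathcal{T}(s'|s,a)$ vanishes by the very definition of $C^a_{\Bar{s},\Bar{s}'}$, so
\[
\mathcal{\Bar{T}}_\omega(\Bar{s}'|\Bar{s},a)=\sum_{s\in C^a_{\Bar{s},\Bar{s}'}}\omega_{\Bar{s}}(s)\sum_{s'\in\alpha^{-1}(\Bar{s}')}\mathcal{T}(s'|s,a),
\]
a sum of non-negative terms in which, for every index $s$, the factor $\sum_{s'\in\alpha^{-1}(\Bar{s}')}\mathcal{T}(s'|s,a)$ is strictly positive.

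Next I would establish the first equivalence of Definition~\ref{def:connection_preserving}, namely $C^a_{\Bar{s},\Bar{s}'}\neq\emptyset\iff\mathcal{\Bar{T}}_\omega(\Bar{s}'|\Bar{s},a)>0$, in two steps. For the ``$\Leftarrow$'' direction (which needs no hypothesis), if the sum is strictly positive then at least one summand $\omega_{\Bar{s}}(s)\sum_{s'\in\alpha^{-1}(\Bar{s}')}\mathcal{T}(s'|s,a)$ is positive, which forces both $\omega_{\Bar{s}}(s)>0$ and $s\in C^a_{\Bar{s},\Bar{s}'}$, hence $C^a_{\Bar{s},\Bar{s}'}\neq\emptyset$. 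For the ``$\Rightarrow$'' direction, this is exactly where the proposition's hypothesis is used: nonemptiness of $C^a_{\Bar{s},\Bar{s}'}$ guarantees, by assumption, some $s\in C^a_{\Bar{s},\Bar{s}'}$ with $\omega_{\Bar{s}}(s)>0$; for that $s$ the corresponding summand is a product of two strictly positive numbers while all the remaining summands are non-negative, so the total is strictly positive.

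Finally, the second equivalence $C^a_{\Bar{s},\Bar{s}'}=\emptyset\iff\mathcal{\Bar{T}}_\omega(\Bar{s}'|\Bar{s},a)=0$ follows for free by taking the logical contrapositive of the first, using that $\mathcal{\Bar{T}}_\omega(\Bar{s}'|\Bar{s},a)\ge 0$ always (so ``not $>0$'' coincides with ``$=0$'') and that ``$C^a_{\Bar{s},\Bar{s}'}=\emptyset$'' is the negation of ``$C^a_{\Bar{s},\Bar{s}'}\neq\emptyset$''. I do not expect a genuine obstacle here; the only point requiring care is to keep track of which implication actually consumes the weighting-function condition (the ``$\Rightarrow$'' direction of the first equivalence) as opposed to which holds for every WFA regardless of $\omega$ (the ``$\Leftarrow$'' direction), with everything else being routine bookkeeping on sums of non-negative terms.
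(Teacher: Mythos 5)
Your proposal is correct and follows essentially the same argument as the paper: unfold the WFA transition sum, use non-negativity of every summand, and invoke the hypothesis precisely in the direction $C^a_{\Bar{s},\Bar{s}'}\neq\emptyset \Rightarrow \mathcal{\Bar{T}}_\omega(\Bar{s}'|\Bar{s},a)>0$. The only (harmless) difference is that you obtain the second equivalence as the contrapositive of the first via $\mathcal{\Bar{T}}_\omega \geq 0$, whereas the paper argues both equivalences separately, including a small proof by contradiction for the $\mathcal{\Bar{T}}_\omega=0 \Rightarrow C^a_{\Bar{s},\Bar{s}'}=\emptyset$ direction.
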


\begin{proof}
    $\implies$ Assumption: $C_{\Bar{s}, \Bar{s}'}^a\neq\emptyset$. Furthermore, by assumption of the proposition, it exists $s \in C_{\Bar{s}, \Bar{s}'}^a$ for which $\omega_{\Bar{s}}(s)>0$. For this $s \in C_{\Bar{s}, \Bar{s}'}^a$ it also holds that $\sum_{s'\in \alpha^{-1}(\Bar{s}')}\mathcal{T}(s'|s, a)>0$ by definition of $C_{\Bar{s}, \Bar{s}'}^a$, hence $\mathcal{\Bar{T}}(\Bar{s}'|\Bar{s}, a)>0$.

    $\impliedby$ Assumption: $\mathcal{\Bar{T}}(\Bar{s}'|\Bar{s}, a)>0$. Hence, $\sum_{s\in \alpha^{-1}(\Bar{s})}\omega_{\Bar{s}}(s)\sum_{s'\in \alpha^{-1}(\Bar{s}')}\mathcal{T}(s'|s, a)>0$ that implies it exists $s\in\alpha^{-1}(\Bar{s})$ such that $\omega_{\Bar{s}}(s)>0\wedge\sum_{s'\in \alpha^{-1}(\Bar{s}')}\mathcal{T}(s'|s, a)>0$. This means that $C_{\Bar{s}, \Bar{s}'}^{a}\neq\emptyset$.

    $\implies$ Assumption: $C_{\Bar{s}, \Bar{s}'}^a = \emptyset$. Therefore, by definition of $C_{\Bar{s}, \Bar{s}'}^a$, $\forall s \in \alpha^{-1}(\Bar{s})$ then $\sum_{s'\in \alpha^{-1}(\Bar{s}')}\mathcal{T}(s'|s, a)=0$.

    $\impliedby$ Assumption: $\mathcal{\Bar{T}}(\Bar{s}'|\Bar{s}, a)=0$. This implies that $\forall s \in \alpha^{-1}(\Bar{s})$ either $\sum_{s'\in \alpha^{-1}(\Bar{s}')}\mathcal{T}(s'|s, a)=0$ or if $\sum_{s'\in \alpha^{-1}(\Bar{s}')}\mathcal{T}(s'|s, a)>0$ then $\omega_{\Bar{s}}(s) = 0$. The second case cannot happen because it implies that $C_{\Bar{s}, \Bar{s}'}^a \neq \emptyset$, hence it had to exist $s\in C_{\Bar{s}, \Bar{s}'}^a$ such that $\omega_{\Bar{s}}(s)>0$ meaning that $\mathcal{\Bar{T}}(\Bar{s}'|\Bar{s}, a)>0$, contradicting the assumption we started from. Therefore, $C_{\Bar{s}, \Bar{s}'}^a=\emptyset$.
\end{proof}

\begin{restatable}[Connection Preserving ARMDPs]{proposition}{Connection Preserving ARMDPs}\label{prop:connection_preserving_armdps}
ARMDPs are connection preserving whenever $\forall \Bar{s}, \Bar{s}', a$ such that $C_{\Bar{s}, \Bar{s}'}^a\neq\emptyset$ then exists $s \in C_{\Bar{s}, \Bar{s}'}^a$ for which $\xi_{\Bar{s},a}(s)>0$.
\end{restatable}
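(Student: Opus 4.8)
The plan is to establish both biconditionals required by Definition~\ref{def:connection_preserving} for the transition function $\mathcal{\Bar{T}}_\xi$ of Definition~\ref{def:armdps}, under the stated hypothesis on $\xi$. The argument is structurally identical to the one used for Proposition~\ref{prop:connection_preserving_wfa}, with the action-independent weighting function $\omega_{\Bar{s}}$ replaced everywhere by the action-dependent one $\xi_{\Bar{s},a}$; the only point requiring mild care is to keep the action $a$ fixed throughout the derivation, since $\xi$ now carries $a$ in its subscript, but this introduces no real difficulty because the hypothesis of the proposition is quantified over exactly the same triples $(\Bar{s},\Bar{s}',a)$ and can be invoked verbatim for the fixed $a$ under consideration.

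First I would prove $C_{\Bar{s},\Bar{s}'}^a \neq \emptyset \implies \mathcal{\Bar{T}}_\xi(\Bar{s}'|\Bar{s},a) > 0$. Fixing such a triple, the hypothesis gives some $s \in C_{\Bar{s},\Bar{s}'}^a$ with $\xi_{\Bar{s},a}(s) > 0$, and by the definition of $C_{\Bar{s},\Bar{s}'}^a$ this same $s$ satisfies $\sum_{s'\in \alpha^{-1}(\Bar{s}')}\mathcal{T}(s'|s,a) > 0$; hence the summand indexed by $s$ in $\mathcal{\Bar{T}}_\xi(\Bar{s}'|\Bar{s},a)$ is strictly positive, and since every summand is a product of nonnegative quantities the whole sum is strictly positive. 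For the converse, $\mathcal{\Bar{T}}_\xi(\Bar{s}'|\Bar{s},a) > 0$ forces at least one summand to be positive, i.e.\ some $s \in \alpha^{-1}(\Bar{s})$ with $\xi_{\Bar{s},a}(s) > 0$ and $\sum_{s'\in \alpha^{-1}(\Bar{s}')}\mathcal{T}(s'|s,a) > 0$, which is precisely $s \in C_{\Bar{s},\Bar{s}'}^a$, so $C_{\Bar{s},\Bar{s}'}^a \neq \emptyset$.

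The remaining equivalence $C_{\Bar{s},\Bar{s}'}^a = \emptyset \iff \mathcal{\Bar{T}}_\xi(\Bar{s}'|\Bar{s},a) = 0$ then follows by contraposition of the two implications above, using that $\mathcal{\Bar{T}}_\xi(\Bar{s}'|\Bar{s},a) \ge 0$ always (so ``$=0$'' negates ``$>0$'') and that ``$C_{\Bar{s},\Bar{s}'}^a = \emptyset$'' negates ``$\neq\emptyset$''; alternatively I would argue it directly as in the WFA proof, noting that $C_{\Bar{s},\Bar{s}'}^a=\emptyset$ makes $\sum_{s'\in\alpha^{-1}(\Bar{s}')}\mathcal{T}(s'|s,a)=0$ for every $s\in\alpha^{-1}(\Bar{s})$ and hence every summand vanishes, while conversely $\mathcal{\Bar{T}}_\xi(\Bar{s}'|\Bar{s},a)=0$ together with the hypothesis rules out the existence of an $s$ with $\sum_{s'}\mathcal{T}(s'|s,a)>0$, since such an $s$ would put $C_{\Bar{s},\Bar{s}'}^a\neq\emptyset$ and therefore supply a positive-weight witness contradicting $\mathcal{\Bar{T}}_\xi(\Bar{s}'|\Bar{s},a)=0$. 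I do not expect any genuine obstacle: the statement is a direct analogue of the WFA case, and the only thing to watch is careful bookkeeping of the fixed action $a$ in the subscript of $\xi$.
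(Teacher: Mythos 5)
Your proposal is correct and follows exactly the route the paper takes: the paper's own proof simply states that the argument is analogous to the WFA case with $\omega$ replaced by $\xi$, and you have carried out that substitution explicitly, checking all four implications. No issues.
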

\begin{proof}
    The proof is analogous to the one above substituting $\omega$ for $\xi$.
\end{proof}

\begin{restatable}[Connection Preserving ABPMDPs]{proposition}{Connection Preserving ABPMDPs}\label{prop:connection_preserving_abpmdps}
ABPMDPs are connection preserving whenever $\forall \Bar{s}, \Bar{s}', a$ such that $C_{\Bar{s}, \Bar{s}'}^a\neq\emptyset$ then if $\min_{s\in \alpha^{-1}(\Bar{s})}\sum_{s'\in \alpha^{-1}(\Bar{s}')}\mathcal{T}(s'|s, a) = 0$, it must be excluded from the interval.
\end{restatable}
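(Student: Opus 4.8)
The plan is to mirror the structure of the proof of Proposition~\ref{prop:connection_preserving_wfa}, verifying each of the four implications packaged in Definition~\ref{def:connection_preserving} in turn. First, though, I would pin down what the predicates $\mathcal{\Bar{T}}_I(\Bar{s}'|\Bar{s},a) > 0$ and $\mathcal{\Bar{T}}_I(\Bar{s}'|\Bar{s},a) = 0$ mean when the transition function is interval-valued: I read the former as ``every admissible value in the interval is strictly positive'' and the latter as ``the only admissible value is $0$''. Write $m$ and $M$ for the minimum and maximum over $s \in \alpha^{-1}(\Bar{s})$ of $\sum_{s'\in\alpha^{-1}(\Bar{s}')}\mathcal{T}(s'|s,a)$ appearing in Definition~\ref{def:abpmdps}. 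The whole argument then reduces to elementary facts about $m$ and $M$, using only that each inner sum lies in $[0,1]$ and that $m$ and $M$ are attained at some state of $\alpha^{-1}(\Bar{s})$.

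For the direction $C_{\Bar{s},\Bar{s}'}^a \neq \emptyset \implies \mathcal{\Bar{T}}_I(\Bar{s}'|\Bar{s},a) > 0$: any $s \in C_{\Bar{s},\Bar{s}'}^a$ satisfies $\sum_{s'\in\alpha^{-1}(\Bar{s}')}\mathcal{T}(s'|s,a) > 0$, so $M > 0$. If moreover $m > 0$, the interval $[m,M]$ is already contained in $(0,1]$; if $m = 0$, the hypothesis of the proposition excludes $0$ from the interval, so again every admissible value is positive. This is the only step in which the exclusion clause is used, and it is the heart of the argument. For the converse $\mathcal{\Bar{T}}_I(\Bar{s}'|\Bar{s},a) > 0 \implies C_{\Bar{s},\Bar{s}'}^a \neq \emptyset$: since $M$ is an admissible value (it is attained), positivity of all admissible values forces $M > 0$, and a maximising state $s^\star$ then witnesses $\sum_{s'\in\alpha^{-1}(\Bar{s}')}\mathcal{T}(s'|s^\star,a) > 0$, i.e.\ $s^\star \in C_{\Bar{s},\Bar{s}'}^a$.

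The two implications for the empty case are immediate. If $C_{\Bar{s},\Bar{s}'}^a = \emptyset$, then $\sum_{s'\in\alpha^{-1}(\Bar{s}')}\mathcal{T}(s'|s,a) = 0$ for every $s \in \alpha^{-1}(\Bar{s})$, hence $m = M = 0$, the interval collapses to $\{0\}$, and $\mathcal{\Bar{T}}_I(\Bar{s}'|\Bar{s},a) = 0$. Conversely, following the device used in the WFA proof, I would argue by contradiction: if $\mathcal{\Bar{T}}_I(\Bar{s}'|\Bar{s},a) = 0$ while $C_{\Bar{s},\Bar{s}'}^a \neq \emptyset$, the first implication already established would give $\mathcal{\Bar{T}}_I(\Bar{s}'|\Bar{s},a) > 0$, which is incompatible with the interval being $\{0\}$; hence $C_{\Bar{s},\Bar{s}'}^a = \emptyset$. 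I do not expect any real obstacle beyond making the interpretation of the order relations on intervals explicit; once that is fixed, each implication is a one-line consequence of non-negativity and of the extrema being attained, and the exclusion clause does precisely the work of ruling out the mixed case $m = 0 < M$.
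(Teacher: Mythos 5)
Your proposal is correct and follows essentially the same route as the paper: the forward implications use that a witness in $C_{\Bar{s},\Bar{s}'}^a$ forces the maximum to be positive while the exclusion clause removes $0$ when the minimum vanishes, and the empty case collapses the interval to $[0,0]$. The only cosmetic difference is that you derive the final implication ($\mathcal{\Bar{T}}_I = 0 \Rightarrow C_{\Bar{s},\Bar{s}'}^a = \emptyset$) by contraposition from the first, whereas the paper does a direct case analysis on the form of the interval; your explicit fixing of the semantics of $>0$ and $=0$ for interval-valued transitions is a welcome clarification of something the paper leaves implicit.
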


\begin{proof}
    $\implies$ Assumption: $C_{\Bar{s}, \Bar{s}'}^a\neq\emptyset$. Then, by definition of $C_{\Bar{s}, \Bar{s}'}^a$, it exists $s \in \alpha^{-1}(\Bar{s})$ such that $\sum_{s'\in \alpha^{-1}(\Bar{s}')}\mathcal{T}(s'|s, a)>0$, implying $\max_{s\in \alpha^{-1}(\Bar{s})}\sum_{s'\in \alpha^{-1}(\Bar{s}')}\mathcal{T}(s'|s, a)>0$. By the assumption of the proposition $0$ will be excluded as a lower bound of the interval,  hence $\mathcal{\Bar{T}}(\Bar{s}'|\Bar{s}, a)>0$.

    $\impliedby$ Assumption $\mathcal{\Bar{T}}(\Bar{s}'|\Bar{s}, a)>0$. This only happens if $\max_{s\in \alpha^{-1}(\Bar{s})}\sum_{s'\in \alpha^{-1}(\Bar{s}')}\mathcal{T}(s'|s, a)>0$, that, in turn, implies $C_{\Bar{s}, \Bar{s}'}^a\neq\emptyset$.

    $\implies$ Assumption: $C_{\Bar{s}, \Bar{s}'}^a=\emptyset$. Then it does not exists $s\in \alpha^{-1}(\Bar{s})$ such that $\sum_{s'\in \alpha^{-1}(\Bar{s}')}\mathcal{T}(s'|s, a)>0$. This further implies that $\mathcal{\Bar{T}}(\Bar{s}'|\Bar{s}, a)=[0,0] =0$.

    $\impliedby$ Assumption $\mathcal{\Bar{T}}(\Bar{s}'|\Bar{s}, a)=0$. If the interval would be of the following form: $\mathcal{\Bar{T}}(\Bar{s}'|\Bar{s}, a)=[0,c]$, with $c>0$ we would discard zero according to the assumption of the proposition. The last case that remains is $\mathcal{\Bar{T}}(\Bar{s}'|\Bar{s}, a)=[0,0] =0$ that implies it does not exists $s \in \alpha^{-1}(\Bar{s})$ such that $\sum_{s'\in \alpha^{-1}(\Bar{s}')}\mathcal{T}(s'|s, a)>0$, further implying $C_{\Bar{s}, \Bar{s}'}^a=\emptyset$.
\end{proof}

\subsection{Weighting Function Abstractions}\label{sec:wfa_mapping}
For what concerns WFAs, other than requiring the condition stated in Proposition \ref{prop:connection_preserving_wfa}, we need to require the obtained abstraction to be deterministic, see Definition~\ref{def:deterministic}. Since there may be actions in any state of the original MDP that are not applicable (\ie the transition function has zero probability for all next states), then for the abstraction to be deterministic we have to further require that:
\begin{equation}
    %\forall s \in \alpha^{-1}(\Bar{s}): \sum_{s'\in \alpha^{-1}(\Bar{s}')}\mathcal{T}(s'|s, a) = 0 \implies \omega_{\Bar{s}}(s) = 0.\label{eq:wfa_remapping_cond}
    \sum_{s\in C_{\Bar{s}, \Bar{s}'}^a}\omega_{\Bar{s}}(s) = 1 ~\forall~ a, \Bar{s}, \Bar{s}' ~\text{with}~ \Bar{s}\neq\Bar{s}'.\label{eq:wfa_remapping_cond}
\end{equation}
The condition $\Bar{s}\neq\Bar{s}'$ is added to remove self loops in the abstraction that are useless. In the context of Planning, we can remove all the actions whose effects are entirely annihilated by the aggregation function. 

Unfortunately, there are cases were the condition reported in Eq.~\eqref{eq:wfa_remapping_cond} cannot be satisfied. Let us imagine an abstract state $\Bar{s}=\{s_1, s_2, s_3\}$ with action $a$ applicable in $s_1$ and $s_2$, action $b$ applicable in $s_2$, and action $c$ applicable in $s_1$. Eq. \eqref{eq:wfa_remapping_cond} yields the following system of constraints:
\begin{align*}
    \begin{cases}
        \omega_{\Bar{s}}(s_1) + \omega_{\Bar{s}}(s_2) = 1\\
        \omega_{\Bar{s}}(s_2) = 1\\
        \omega_{\Bar{s}}(s_1) = 1 \\
    \end{cases}
\end{align*}
that has no solution.
%If the above condition is not satisfied then $\exists s \in \alpha^{-1}(\Bar{s}): \sum_{s'\in \alpha^{-1}(\Bar{s}')}\mathcal{T}(s'|s, a) = 0 \wedge \omega_{\Bar{s}}(s) \neq 0$ this would imply that, on the MDP side, the abstraction is stochastic, indeed, $\mathcal{\Bar{T}}_\omega(\Bar{s}'|\Bar{s}, a) = \sum_{s\in \alpha^{-1}(\Bar{s})}\omega_{\Bar{s}}(s)\sum_{s'\in \alpha^{-1}(\Bar{s}')}\mathcal{T}(s'|s, a) < 1$, hence, different from the planning abstraction we started from. Unfortunately, the condition stated in Eq. \eqref{eq:wfa_remapping_cond} cannot be satisfied because we have just one weight function $\omega_{\Bar{s}}$ per each abstract class. Due to this fact it may happen that two different actions executable from the same abstract state require, for a certain state $s$, $\omega_{\Bar{s}}(s)=0$ and $\omega_{\Bar{s}}(s)\neq0$ at the same time. 
We report a more complete example of such a contradiction in Figure \ref{fig:degenerate}. Even if $\omega_{\Bar{s}}$ was a generic function and not a probability distribution over the abstract class, we would not be able to satisfy that system of constraints. In the context of Planning, this situation may happen only when the aggregation function abstracts away some preconditions of three actions applicable in different states of the same abstract class. In the case where the preconditions of the actions in an abstract class are not affected by the projection function $\alpha$, we do not have any problem because the actions will be applicable in any $s \in \alpha^{-1}(\Bar{s})$, hence, the constraint in Eq. \eqref{eq:wfa_remapping_cond} can be satisfied. %$\sum_{s'\in \alpha^{-1}(\Bar{s}')}\mathcal{T}(s'|s, a)$ will be always different than zero.

\begin{figure*}[!tbh]
    \centering
    \begin{tikzpicture}
        \begin{scope}[every node/.style={circle,thick,draw}]
            \node (A) at (-7,1) {$s_1$};
            \node (B) at (-5,1) {$s_2$};
            \node (C) at (-7,-1) {$s_3$};
            \node (D) at (-5,-1) {$s_4$};
            
            \node (E) at (-7,5) {$s_5$};
            \node (F) at (-7,3) {$s_6$};
            \node (G) at (-5, 5) {$s_7$};
            \node (H) at (-5,3) {$s_{8}$};
            
            \node (I) at (2,5) {$s_{9}$};
            \node (J) at (2, 3) {$s_{10}$};
            \node (K) at (0, 5) {$s_{11}$};
             \node (L) at (0, 3) {$s_{12}$};

            \node (M) at (2,1) {$s_{13}$};
            \node (N) at (2, -1) {$s_{14}$};
            \node (O) at (0, 1) {$s_{15}$};
            \node (P) at (0, -1) {$s_{16}$};

            \node (S12) [rotate=0][draw,dashed,inner sep=0pt, circle,yscale=0.9,fit={(A)(B)(C)(D)}] {$\Bar{s}_0$};
            \node (S1N) [rotate=0][draw,dashed,inner sep=0pt, circle,yscale=0.9,fit={(E)(F)(G)(H)}] {$\Bar{s}_1$};
            \node (S2N) [rotate=0][draw,dashed,inner sep=0pt, circle,yscale=0.9,fit={(L)(J)(I)(K)}] {$\Bar{s}_2$};
            \node (S2N) [rotate=0][draw,dashed,inner sep=0pt, circle,yscale=0.9,fit={(M)(N)(O)(P)}] {$\Bar{s}_3$};
        \end{scope}

        \begin{scope}[>={Stealth[black]},
                      every node/.style={fill=white,circle},
                      every edge/.style={draw=red,very thick}]
            \path [->] (F) edge node {$b$} (K);
            %\path [->] (G) edge[bend left=30] node {$b$} (I);
            %\path [->] (G) edge node {$h$} (M);
            \path [->] (H) edge node {$h$} (N);
            \path [->] (F) edge node {$f$} (E);
            \path [->] (H) edge node {$f$} (G);
            \path [->] (F) edge[bend right=30] node {$a$} (C);
            \path [->] (H) edge[bend left=30] node {$a$} (D);

            \path [->] (O) edge node {$g$} (M);
            \path [->] (P) edge node {$g$} (N);
            \path [->] (M) edge[bend left=30] node {$d$} (I);
            \path [->] (N) edge[bend right=30] node {$d$} (J);
            \path [->] (O) edge[bend right=30] node {$d$} (K);
            \path [->] (P) edge[bend left=30] node {$d$} (L);

            \path [->] (J) edge[bend right=30] node {$c$} (H);
            \path [->] (L) edge[bend left=30] node {$c$} (F);
        \end{scope}
        \matrix [draw,below left] at (6.5,6) {
            \node [label=right:{$p_2$}] {$s_1$}; \\
            \node [label=right:{$p_2, p_3$}] {$s_2$}; \\
            \node [label=right:{$p_2, p_4$}] {$s_3$}; \\
            \node [label=right:{$p_2, p_3, p_4$}] {$s_4$}; \\
            \node [label=right:{$p_1, p_2$}] {$s_5$}; \\
            \node [label=right:{$p_1, p_2, p_4$}] {$s_6$}; \\
            \node [label=right:{$p_1, p_2, p_3$}] {$s_7$}; \\
            \node [label=right:{$p_1, p_2, p_3, p_4$}] {$s_8$}; \\
            \node [label=right:{$p_1, p_3$}] {$s_9$}; \\
            \node [label=right:{$p_1, p_3, p_4$}] {$s_{10}$}; \\
            \node [label=right:{$p_1$}] {$s_{11}$}; \\
            \node [label=right:{$p_1, p_4$}] {$s_{12}$}; \\
            \node [label=right:{$p_3$}] {$s_{13}$}; \\
            \node [label=right:{$p_3, p_4$}] {$s_{14}$}; \\
            \node [label=right:{}] {$s_{15}$}; \\
            \node [label=right:{$p_4$}] {$s_{16}$}; \\
        };
    \end{tikzpicture}
    \caption{An abstraction that cannot be represented through WFA. The aggregation function retains only $p_1$ and $p_2$, the original initial state is $p_4$, and the goal is $\neg p_1 \wedge p_2 \wedge p_3$. Action $a$ requires $\omega_{\Bar{s}_1}(s_6) + \omega_{\Bar{s}_1}(s_8) = 1$, action $b$ requires $\omega_{\Bar{s}_1}(s_6) = 1$, action $h$ requires $\omega_{\Bar{s}_1}(s_8) = 1$, that yield a contradiction.}
    \label{fig:degenerate}
\end{figure*}
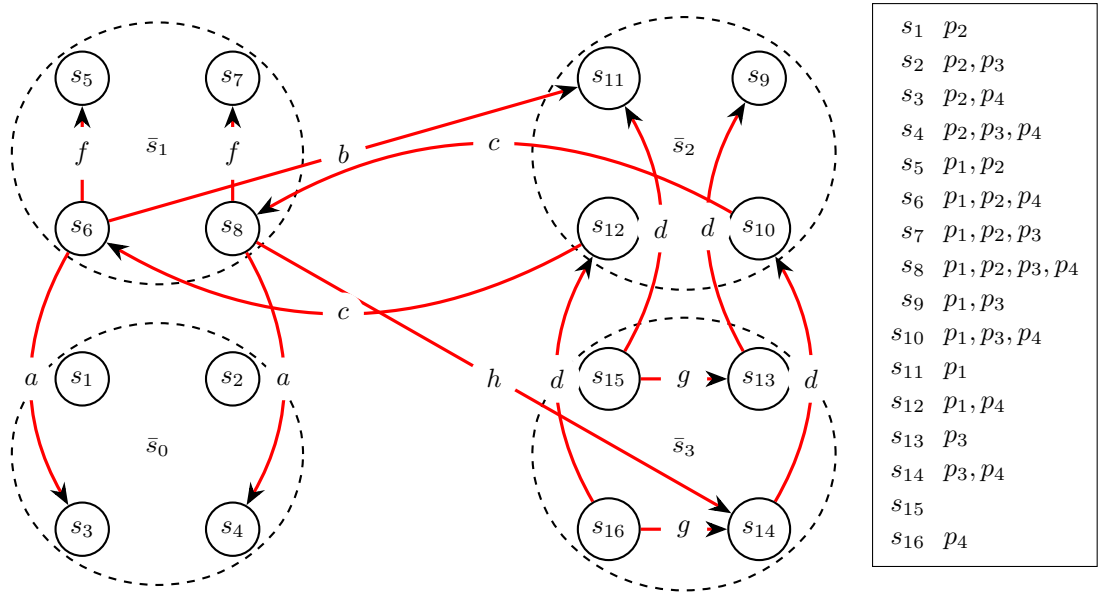

\subsection{Abstract Robust Markov Decision Processes}
In the previous section, we have seen how WFAs, in some cases, may not be able to preserve connectivity \wrt the original MDP. This was due to the fact that we were forced to use the same weighting function $\omega_{\Bar{s}}$ for multiple, different actions. In the context of ARMDPs, this is no longer a problem. Indeed, the requirement in Eq.~\eqref{eq:wfa_remapping_cond} becomes:
\begin{equation}
    %\forall s \in \alpha^{-1}(\Bar{s}): \sum_{s'\in \alpha^{-1}(\Bar{s}')}\mathcal{T}(s'|s, a) = 0 \implies \omega_{\Bar{s}}(s) = 0.\label{eq:wfa_remapping_cond}
    \sum_{s\in C_{\Bar{s}, \Bar{s}'}^a}\xi_{\Bar{s}, a}(s) = 1 ~\forall~ a, \Bar{s}, \Bar{s}' ~\text{with}~ \Bar{s}\neq\Bar{s}'.\label{eq:armdps_remapping_cond}
\end{equation}
Having one weighting function per action allows us to avoid over-restrictive requirements on the weighting function itself as it was happening for WFAs.
Now, let us check that the condition reported in Eq.~\eqref{eq:armdps_remapping_cond} allows us to represent the planning abstraction. If $C_{\Bar{s}, \Bar{s}'}^a = \emptyset$ then $\Bar{\mathcal{T}}(\Bar{s}'|\Bar{s}, a) = 0$ because $\forall~s\in \alpha^{-1}(\Bar{s})$ then $\sum_{\Bar{s}' \in \alpha^{-1}(\Bar{s}')}\mathcal{T}(s'|s, a) = 0$. Otherwise, if $C_{\Bar{s}, \Bar{s}'}^a \neq \emptyset$ then $\sum_{\Bar{s}' \in \alpha^{-1}(\Bar{s}')}\mathcal{T}(s'|s, a) = 1 ~\forall~s\in C_{\Bar{s}, \Bar{s}'}^a$ due to the fact that the original MDP is deterministic. Now, we can conclude that:
\begin{align}
    \Bar{\mathcal{T}}(\Bar{s}'|\Bar{s}, a) = & \sum_{s\in\alpha^{-1}(\Bar{s})}\xi_{\Bar{s},a}(s)\sum_{s' \in \alpha^{-1}(\Bar{s}')}\mathcal{T}(s'|s,a) = \nonumber\\
    &\sum_{s\in C_{\Bar{s}, \Bar{s}'}^a}\xi_{\Bar{s},a}(s)\sum_{s' \in \alpha^{-1}(\Bar{s}')}\mathcal{T}(s'|s,a) = \label{eq:armdps_use_remapping_cond_1}\\
    &\sum_{s\in C_{\Bar{s}, \Bar{s}'}^a}\xi_{\Bar{s},a}(s) = 1\label{eq:armdps_use_remapping_cond_2}
\end{align}
In Eq.~\ref{eq:armdps_use_remapping_cond_1}, we have used the fact that for each $s \notin C_{\Bar{s}, \Bar{s}'}^a$ and $s \in \alpha^{-1}(\Bar{s})$, then $\sum_{\Bar{s}' \in \alpha^{-1}(\Bar{s}')}\mathcal{T}(s'|s, a) = 0$ and in Eq.~\eqref{eq:armdps_use_remapping_cond_2}, we have used $\sum_{\Bar{s}' \in \alpha^{-1}(\Bar{s}')}\mathcal{T}(s'|s, a) = 1 ~\forall~s\in C_{\Bar{s}, \Bar{s}'}^a$ together with \eqref{eq:armdps_remapping_cond}. Finally, there cannot exist another $\Bar{s}''\neq\Bar{s}'$ such that $\Bar{\mathcal{T}}(\Bar{s}''|\Bar{s}, a) \neq 0$. Otherwise, we would violate the condition required by Proposition~\ref{prop:ambiguity_absence}. To conclude, the above implies that the obtained abstraction is deterministic and connection preserving, hence equivalent to the planning abstraction. %, since the condition reported in Eq.~\eqref{eq:armdps_remapping_cond} implies Proposition~\ref{prop:connection_preserving_armdps}. 
Finally, for what concerns the reward function, given an abstract class $\Bar{s}$ and an action $a$ such that $C^a_{\Bar{s}, \Bar{s}' \cap \mathcal{S}_G} = \{s: s\in \alpha^{-1}(\Bar{s})\wedge \sum_{s'\in \alpha^{-1}(\Bar{s}')\cap \mathcal{S}_G}\mathcal{T}(s'|s, a)>0\}\neq\emptyset$, we require:
\begin{equation}
    \sum_{s\in C_{\Bar{s}, \Bar{s}'\cap \mathcal{S}_G}^a}\xi_{\Bar{s}, a}(s) = 1 ~\text{with}~ \Bar{s}\neq\Bar{s}'.\label{eq:armdps_reward_cond}
\end{equation}
The above equation still satisfies Eq.~\eqref{eq:armdps_remapping_cond} and implies a unitary reward when reaching an abstract goal.~\footnote{It can also handle non-uniform action costs due to the fact that the weight function is specialized to the action.} An example can be seen in Figure~\ref{fig:degenerate} when executing action $a$ in $s_8$ brings us to a goal while executing it in $s_6$ does not. Hence, it is necessary to select $\xi_{\Bar{s}_1, a}(s_8) = 1$ so that $\mathcal{R}_\xi(\Bar{s}_1,a)=1$ as well. This further requirement is necessary only for those abstract classes that have states connected to some goals. If the aggregation function $\alpha$ does not affect the goal, and if applying an action $a$ in a state $s_1\in \alpha^{-1}(\Bar{s})$ the agent lands on $s_1' \in \mathcal{S}_G$, then all the states $s'\in\alpha^{-1}(\alpha(s'_1))$ belong to $\mathcal{S}_G$. This implies that $\alpha^{-1}(\alpha(s'_1))\cap\mathcal{S}_G = \alpha^{-1}(\alpha(s'_1))$ and the additional condition stated in Eq.~\ref{eq:armdps_reward_cond} is superfluous.

\subsection{Abstract Bounded Parameter Markov Decision Processes}
Since the family of abstractions represented by ABPMDPs contains ARMPDs, we can obtain an abstraction that is equivalent to the planning one. In order to do so, it is sufficient to select:
\begin{align*}
    \Bar{\mathcal{T}}(\Bar{s}'|\Bar{s}, a) &= \max_{s\in \alpha^{-1}(\Bar{s})}\sum_{s'\in \alpha^{-1}(\Bar{s}')}\mathcal{T}(s'|s, a)\\
    \Bar{\mathcal{R}}(\Bar{s}, a) &= \max_{s\in \alpha^{-1}(\Bar{s})}\mathcal{R}(s, a).
\end{align*}
The above definition preserves connectivity. Indeed, it satisfies the condition required by Proposition~\ref{prop:connection_preserving_abpmdps}. Furthermore, due to the fact that the underlying MDP is deterministic, the abstraction keeps on being deterministic choosing the maximum for the abstract transition function. The reward function $\Bar{\mathcal{R}}(\Bar{s}, a)$ will be $1$ if there exists an $s\in\alpha^{-1}(\Bar{s})$ such that $s$ through the action $a$ lands on a goal.~\footnote{It is worth to point out the fact that also ABPMDPs are able to handle non uniform costs for the actions, indeed, the definition of the aggregated reward is still action dependent.} We can conclude that projection abstractions in Classical Planning are positioned according to Figure~\ref{fig:abstraction_venn} \wrt abstractions in MDPs.
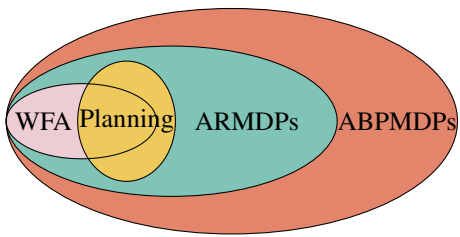
\begin{figure}
    \centering
    \def\wfa{(-2.5,0) ellipse (1cm and 0.5cm)}
    \def\armdp{(-1.3,0) ellipse (2.2cm and 1cm)}
    \def\abpmdp{(-0.5,0) ellipse (3cm and 1.5cm)}
    \def\planning{(-1.9, 0) ellipse (0.65cm and 0.8cm)}
    \begin{tikzpicture}
        \begin{scope}[transparency group]
        \begin{scope}[shift={(3cm,-5cm)}, fill opacity=1, blend mode=normal]
            \fill[PapayaOrange] \abpmdp;
            \fill[Teal] \armdp;
            \fill[PastelPink] \wfa;
            \fill[ButterYellow] \planning;
            
            \draw \abpmdp node at(1.7,0) {ABPMDPs};
            \draw \armdp node at(-0.3, 0) {ARMDPs};
            \draw \wfa node[left] {WFA};
            \draw \planning node {Planning};
        \end{scope}
        \end{scope}
    \end{tikzpicture}
    \caption{Relations among projection abstractions in Planning and abstarctions in MDPs.}
    \label{fig:abstraction_venn}
\end{figure}
\subsection{From Classical Planning to MDPs: an Example}\label{sec:example}
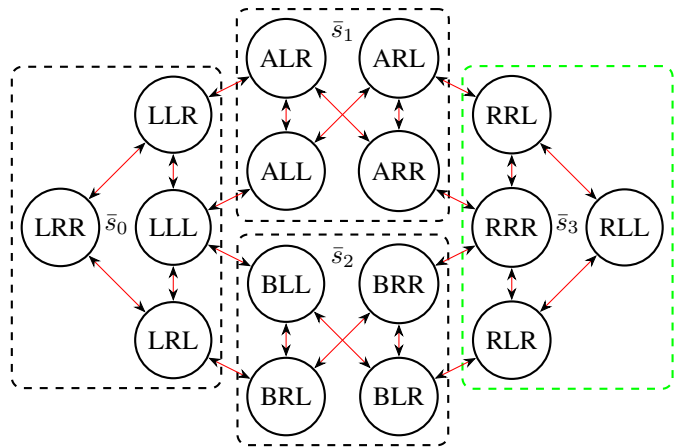
\begin{figure}[!tbh]
    \centering
    \begin{tikzpicture}
        \begin{scope}[every node/.style={circle,thick,draw}]
            \node (BRL) at (0,0) {BRL};
            \node (BLL) at (0,1.5) {BLL};
           
            \node (ALL) at (0,3) {ALL};
            \node (ALR) at (0, 4.5) {ALR};
            
            \node (BLR) at (1.5,0) {BLR};
            \node (BRR) at (1.5,1.5) {BRR};
            \node (ARR) at (1.5,3) {ARR};
            \node (ARL) at (1.5,4.5) {ARL};
           
            \node (RLR) at (3, 0.75) {RLR};
            \node (RRR) at (3, 2.25) {RRR};
            
            \node (RRL) at (3, 3.75) {RRL};
            \node (LRL) at (-1.5, 0.75) {LRL};
            \node (LLL) at (-1.5, 2.25) {LLL};
            \node (LLR) at (-1.5, 3.75) {LLR};
            
            \node (LRR) at (-3, 2.25) {LRR};
            \node (RLL) at (4.5, 2.25) {RLL};
            \node[label={[xshift=0cm, yshift=-2.5cm]$\Bar{s}_0$},draw=black,fit=(LRR) (LLR) (LLL) (LRL),rounded corners,dashed, style=rectangle] (fit) {};
            \node[label={[xshift=0cm, yshift=-0.7cm]$\Bar{s}_1$},draw=black,fit=(ARR) (ALR) (ALL) (ARL),rounded corners,dashed, style=rectangle] (fit) {};
            \node[label={[xshift=0cm, yshift=-0.7cm]$\Bar{s}_2$}, draw=black,fit=(BRR) (BLR) (BLL) (BRL),rounded corners,dashed, style=rectangle] (fit) {};
            \node[label={[xshift=0cm, yshift=-2.5cm]$\Bar{s}_3$},draw=green,fit=(RRR) (RLR) (RLL) (RRL),rounded corners,dashed, style=rectangle] (fit) {};
        \end{scope}

        \begin{scope}[>={Stealth[black]},            every edge/.style={draw=red}]
            \path [<->] (LRR) edge node {} (LLR);
            \path [<->] (LRR) edge node {} (LRL);
            \path [<->] (LRL) edge node {} (LLL);
            \path [<->] (LLR) edge node {} (LLL);
            \path [<->] (ARR) edge node {} (ALR);
            \path [<->] (ARR) edge node {} (ARL);
            \path [<->] (ARL) edge node {} (ALL);
            \path [<->] (ALR) edge node {} (ALL);
            \path [<->] (BRR) edge node {} (BLR);
            \path [<->] (BRR) edge node {} (BRL);
            \path [<->] (BRL) edge node {} (BLL);
            \path [<->] (BLR) edge node {} (BLL);
            \path [<->] (RRR) edge node {} (RLR);
            \path [<->] (RRR) edge node {} (RRL);
            \path [<->] (RRL) edge node {} (RLL);
            \path [<->] (RLR) edge node {} (RLL);
            \path [<->] (LLR) edge node {} (ALR);
            \path [<->] (LLL) edge node {} (ALL);
            \path [<->] (LLL) edge node {} (BLL);
            \path [<->] (LRL) edge node {} (BRL);
            \path [<->] (ARL) edge node {} (RRL);
            \path [<->] (ARR) edge node {} (RRR);
            \path [<->] (BRR) edge node {} (RRR);
            \path [<->] (BLR) edge node {} (RLR);
        \end{scope}
    \end{tikzpicture}
    \caption{The Logistic transition graph for the running example and its abstraction. A node represents a state. The first component of the state is the location of the package ($L,R,A,B$). The second component is the location of truck $A$ ($L,R$), and the last component is the location of truck $B$ ($L,R$). The dashed nodes are abstracted states merging together states that share the same location of the package. The green dashed node aggregates all the goals of the problem. A SAS+ representation has been used for space constraints. Any state (\eg $I=LRR$) can be represented in its equivalent propositional (STRIPS) version (\eg $I = 10000101$).}
    \label{fig:logistic}
\end{figure}

In order to make all the above concepts more concrete, in this section, we will consider an instance of a simplified Logistics domain~\cite{torralba2018aipl} in Classical Planning, fix an abstraction of it, and then obtain the same abstraction through ARMDPs. The considered simplified Logistics problem instance has $2$ locations, namely $L$ and $R$, two trucks, namely $A$ and $B$, and one package $P$. The package starts at location $L$ and needs to reach location $R$. The trucks are currently all at location $R$. A truck may move between locations. A package can be loaded on a truck if they are both in the same location or it can be unloaded in the location the loaded truck is at. In order to represent this problem, we have $8$ binary propositions, hence our state is:
\begin{align*}
    s = \Big(at(P, L), at(P, R), in(P, A), in(P, B), \\ at(A, L), at(A, R), at(B, L), at(B, R)\Big).
\end{align*}

In this example, we will consider a Planning abstraction having an aggregation function that disregards the last four components of the state vector, that can be interpreted as the trucks being everywhere. Hence:
\begin{align*}
    \Bar{s} = \alpha(s) = \Big(at(P, L), at(P, R), in(P, A), in(P, B)\Big).
\end{align*}
A representation of the transition graph for the above described problem and its abstraction is shown in Figure~\ref{fig:logistic}. The actions in PDDL are reported in Listing~\ref{lst:actions}. Their abstractions can be obtained by removing all the predicates of this form \textit{(at ?t ?l)} where \textit{t} is a truck and \textit{l} is a location. If we ground the actions with the objects of the problem (or the abstraction, respectively), we can exhaustively instantiate the transition graph.
Now, starting from the same transition graph represented as an MDP, we will use ARMDPs to obtain the same abstraction as the one we have in Classical Planning. This will be done satisfying the constraint imposed by Eq.~\ref{eq:armdps_remapping_cond}. We consider only the actions \textit{Load} and \textit{Unload} that correspond to edges between the abstract states in Figure~\ref{fig:logistic} because they are the only ones whose effects are not annihilated by the aggregation function. Given that $C_{\Bar{s}_0, \Bar{s}_1}^{Load_{L, A}}=\{LLR, LLL\}$, $C_{\Bar{s}_0, \Bar{s}_2}^{Load_{L, B}}=\{LRL, LLL\}$, $C_{\Bar{s}_1, \Bar{s}_3}^{Unload_{R, A}}=\{ARL, ARR\}$, $C_{\Bar{s}_2, \Bar{s}_3}^{Unload_{R, B}}=\{BLR, BRR\}$, $C_{\Bar{s}_3, \Bar{s}_1}^{Load_{R, A}}=\{RRL, RRR\}$, $C_{\Bar{s}_3, \Bar{s}_2}^{Load_{R, B}}=\{RLR, RRR\}$, $C_{\Bar{s}_1, \Bar{s}_0}^{Unload_{L, A}}=\{ALR, ALL\}$, and $C_{\Bar{s}_2, \Bar{s}_0}^{Unload_{L, B}}=\{BRL, BLL\}$, then we can satisfy Eq.~\ref{eq:armdps_remapping_cond} setting $\xi_{\Bar{s}, a}(s) = 0.5$ for any $a \in \{Load_{L, A}, Load_{L, B}, Load_{R, A}, Load_{R, B}, Unload_{L, A},\allowbreak Unload_{L, B}, Unload_{R, A}, Unload_{R, B}\}$, $s \in C_{\Bar{s}, \Bar{s}'}^a$, $\Bar{s}$ and $\Bar{s}'$ both in $\{\Bar{s}_0,\Bar{s}_1, \Bar{s}_2, \Bar{s}_3\}$.\footnote{$(Un)Load_{\{L, R\}, \{A, B\}}$ refers to the grounded actions.} Let us see an example with $\Bar{s} = \Bar{s}_0$, $a = Load_{L, A}$, $C_{\Bar{s}, \Bar{s}'}^a = C_{\Bar{s}_0, \Bar{s}_1}^{Load_{L, A}}$. In this case $\xi_{\Bar{s}_0, Load_{L, A}}(LLL)+\xi_{\Bar{s}_0, Load_{L, A}}(LLR) = 0.5 + 0.5 = 1$. Having two states in the abstract class $\alpha^{-1}(\Bar{s}_0)$ where $Load_{L, A}$ is applicable prevents the abstract transition function to be uniquely defined. Choosing $\xi_{\Bar{s}_0, Load_{L, A}}$ as above solve this issue, preserve the connectivity between $\Bar{s}_0$ and $\Bar{s}_1$, and yields a deterministic abstract transition function. Thus, we have an equivalent representation.

\begin{listing}[tb]%
\caption{Actions {\tt Simplified Logistics}.}%
\label{lst:actions}%
\begin{lstlisting}[language=PDDL]
(:action Load
    :parameters (?l - location ?p - package ?t - truck)
    :precondition (and (at ?p ?l) (at ?t ?l))
    :effect (and (in ?p ?t) not (at ?p ?l)))
(:action Unload
    :parameters (?l - location ?p - package ?t - truck)
    :precondition (and (in ?p ?t) (at ?t ?l))
    :effect (and not (in ?p ?t) (at ?p ?l)))
(:action Move
    :parameters (?l1 ?l2 - location ?t - truck)
    :precondition (at ?t ?l1)
    :effect (and (at ?t ?l2) not (at ?t ?l1)))
\end{lstlisting}
\end{listing}

\section{The Relation Between Abstractions in Probabilistic Planning and MDPs}
Recently, in the context of Probabilistic Planning~\cite{natarajan2022planning},~\citealp{klossner2021pattern} and~\citealp{klossner2021ssp} developed PDBs able to take into account the stochastic nature of the transition function characterizing an MDP. They do so under the goal-probability maximization~\cite{kolobov2011heuristic} and Stochastic Shortest Path (SSP)~\cite{bertsekas1991analysis} models (we will only consider SSPs in the context of this work, the abstraction is the same in both frameworks). %In general MDPs, transitions between abstract states may not have a unique probability distribution because the abstract class members, $s\in\alpha^{-1}(\Bar{s})$, may have themselves different probability distributions.
The development of their heuristics hinges on the fact that the transition function of their abstraction, $\mathcal{\Bar{T}}(\Bar{s}'|\Bar{s}, a)$, does not depend on the choice of the representative state $s\in \alpha^{-1}(\Bar{s})$ where $a$ is applicable. However, this property finds its roots in the absence of conditional effects for the actions in our probabilistic planning domain model. To show this, let us generalize Proposition~\ref{prop:ambiguity_absence} to the Probabilistic Planning case. If there are no conditional effects in our probabilistic planning domain, then given a state $s$ where action $a$ is applicable, the next state $s'$ will have the following form: $s' = s + f_a(e)$, $e \sim P_a(p_{a,1},\dots,p_{a,n_a})$. Furthermore, observe that, $\forall s_1, s_2$ such that $s_1\neq s_2$ where $a$ is applicable, $\forall e$ effect of $a$, then $\mathcal{T}(s_1+f_a(e)|s_1,a)=\mathcal{T}(s_2+f_a(e)|s_2, a)$.
\begin{restatable}[Ambiguity Absence in the Abstraction Transition Graph for Probabilistic Planning]{proposition}{Absence of ambiguity in the Abstraction Transition Graph for Probabilistic Planning}\label{prop:ambiguity_absence_pplanning}
$\forall~s_1,s_2 \in \alpha^{-1}(\Bar{s})$ such that $a$ is applicable in $s_1$ and $s_2$, then, given any realization $e$ sampled from $P_a$, $\alpha(s'_{s_1,a}) = \alpha(s'_{s_2,a}) = \Bar{s}'$.
\end{restatable}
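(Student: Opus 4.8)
The plan is to reproduce the argument of Proposition~\ref{prop:ambiguity_absence} almost verbatim, exploiting the two facts that carry over from the classical case: (i) the absence of conditional effects makes the effect term state-independent, and (ii) a projection $\alpha$ commutes with the vector addition that implements action application. The only genuinely new ingredient is that, in the probabilistic setting, the ``effect'' is the random quantity $f_a(e)$ with $e\sim P_a$, and the statement is quantified over a fixed realization $e$; so I first need to argue that it is meaningful to apply the \emph{same} realization $e$ in both $s_1$ and $s_2$.

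First I would record the hypotheses: since $s_1,s_2\in\alpha^{-1}(\Bar{s})$ we have $\alpha(s_1)=\alpha(s_2)=\Bar{s}$, and since $a$ is applicable in both states and the domain has no conditional effects, applying $a$ under realization $e$ yields $s'_{s_1,a}=s_1+f_a(e)$ and $s'_{s_2,a}=s_2+f_a(e)$ with the \emph{same} $f_a(e)$, because the effects depend only on $a$ and $e$ and not on the state. This is precisely the observation stated just before the proposition, namely $\mathcal{T}(s_1+f_a(e)\mid s_1,a)=\mathcal{T}(s_2+f_a(e)\mid s_2,a)$: the support and the probabilities of the outcome distribution coincide in $s_1$ and $s_2$, so fixing a realization $e$ of $P_a$ determines a well-defined next state from either representative.

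Then I would invoke linearity of the projection, $\alpha(s+v)=\alpha(s)+\alpha(v)$ in the binary-vector representation (with the clamping caveat from the earlier footnote, which is harmless since $\alpha(clamp(s))=clamp(\alpha(s))$), applied with $v=f_a(e)$:
\begin{align*}
\alpha(s'_{s_1,a}) &= \alpha(s_1)+\alpha(f_a(e)) = \Bar{s}+\alpha(f_a(e))\\
&= \alpha(s_2)+\alpha(f_a(e)) = \alpha(s'_{s_2,a}),
\end{align*}
and we name this common value $\Bar{s}'$, which closes the argument.

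I do not expect a serious obstacle here; the proof is a direct lift of the classical one. The subtlety worth spelling out — and the place where a careless argument could go wrong — is the claim that the \emph{same} effect realization is available from both representative states: this is exactly where the no-conditional-effects assumption (together with the state-independence of $P_a$) is used, and it is what makes the statement fail in general for domains with conditional effects, mirroring the non-deterministic abstract transitions illustrated in Figure~\ref{fig:ambiguity}.
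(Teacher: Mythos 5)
Your proof is correct and matches the paper's argument exactly: both rely on the state-independence of $f_a(e)$ under a fixed realization $e$ and the linearity of the projection $\alpha$ to conclude $\alpha(s_1)+\alpha(f_a(e))=\alpha(s_2)+\alpha(f_a(e))$. Your extra remark about why the same realization $e$ is available from both representatives is a useful clarification the paper leaves implicit, but the core argument is identical.
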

\begin{proof}
    Since $s_1,s_2 \in \alpha^{-1}(\Bar{s})$, then $\alpha(s_1) = \alpha(s_2) = \Bar{s}$. Additionally, $s'_{s_1,a} = s_1 +f_a(e)$ and $s'_{s_2,a} = s_2 +f_a(e)$. This implies:
    \begin{align*}
        \alpha(s'_{s_1,a}) &= \alpha(s_1)+\alpha(f_a(e)) = \Bar{s} +\alpha(f_a(e)) \\ 
        &= \alpha(s_2) + \alpha(f_a(e)) = \alpha(s'_{s_2,a}) = \Bar{s}'.
    \end{align*}
\end{proof}
Proposition~\ref{prop:ambiguity_absence_pplanning} and the above observation about the transition function allow us to state the fact that:
\begin{align}
    \sum_{s'\in \alpha^{-1}(\Bar{s}')}\mathcal{T}(s'|s_1, a) &= \sum_{s'\in \alpha^{-1}(\Bar{s}')}\mathcal{T}(s'|s_2, a)\nonumber\\ 
    &\forall s_1, s_2 \in \alpha^{-1}(\Bar{s}), \label{eq:abstraction_collapse}
\end{align}

where $a$ is applicable. This implies that among all the states $s$ belonging to a certain abstract class $\alpha^{-1}(\Bar{s})$, where action $a$ is applicable, it does not matter what state we chose as representative of the abstract transition function for the abstract state $\Bar{s}$ and action $a$. Indeed through the lenses of the aggregation function they will all transition to the same abstract next states with the same probabilities. %This is due to the fact that the only difference between $s'_{s_1,a}$ and $s'_{s_2,a}$ is given by $s_1$ and $s_2$.
Even though Equation~\ref{eq:abstraction_collapse} was proved first by~\citealp{klossner2021pattern}, they do not explicitly link this fact to the absence of conditional effects and linearity of the aggregation function (used in Proposition~\ref{prop:ambiguity_absence_pplanning}). 

Under the above described properties, for what concerns ARMDPs, we will have the same abstract transition function for any choice of weights that satisfies the condition stated in Equation~\ref{eq:armdps_remapping_cond}.~\footnote{We do not consider weighting function abstractions because, as reported in Section~\ref{sec:wfa_mapping}, they cannot handle cases when some actions are not applicable to all the states in an abstract class.} Additionally, if there are states in the abstract class $\alpha^{-1}(\Bar{s})$ that, through $a$, are connected to a goal state, we have to require the more restrictive condition stated in Equation~\ref{eq:armdps_reward_cond} to preserve the reward coming from reaching the goal. These choices make the abstraction connection preserving and equivalent to the one in Probabilistic Planning. For what concerns, ABPMDPs, the above described properties coming from Probabilistic Planning without conditional effects imply that the interval associated to $\mathcal{\Bar{T}}_I(\Bar{s}'|\Bar{s}, a)$ is either one single point (in case $a$ is applicable in any $s\in \alpha^{-1}(\Bar{s})$) or its left boundary is $0$ (in case it exists a state $s\in\alpha^{-1}(\Bar{s})$ where $a$ is not applicable). Choosing the maximum for both transition and reward functions as we did in the deterministic case will yield the same abstraction as the one we have in Probabilistic Planning.

Finally, it is important to highlight the consequences implied by the absence of conditional effects from a broader perspective. While recompiling a problem to get rid of conditional effects is always possible in theory, in practice this may not be the case due to time/memory constraints. The compilation grows exponentially in size (and time) with the number of conditional effects within the same action. This representational issue goes up to the point that not even a (P)PDDL representation is feasible: an example would be Atari $2600$ Games~\cite{bellemare2013arcade, lipovetzky2015classical}. These games do not have many actions, but their effects heavily depend on the state they are applied in. On the one hand, tackling tasks that can be represented through an action schema free of conditional effects has many advantages. The most important within this paper's context being the possibility of computing the abstraction without needing the transition graph. On the other hand, this prevents us to tackle tasks with many conditional effects. In this last case, since Equation~\ref{eq:abstraction_collapse} does not hold anymore, via WFA, ARMDPs, and ABPMDPs we have access to a richer set of abstractions, even considering aggregation functions that are projections. New research is required to investigate ways that allow to automatically compute abstractions fast in these very complex practical scenarios that can be found in Probabilistic Planning as well as in Classical Planning.

\subsection{Abstractions for Discounted Markov Decision Processes in the Stochastic Shortest Path Setting}\label{sec:MDP_vs_SSP}
Abstractions for MDPs, namely WFAs, ARMDPs, and ABPMDPs, have been developed for discounted MDPs that are a subfamily of SSPs~\cite{natarajan2022planning}. Aggregating the transition function and the reward function as prescribed by the previously mentioned abstractions (under an arbitrary aggregation function) in the context of discounted MDPs will still yield a discounted MDP. This may not be the case if the same operations are applied in the context of SSPs.
An SSP is a tuple $\langle \mathcal{S}, \mathcal{A}, \mathcal{T}, \mathcal{R}, G\rangle$ where $\mathcal{S}, \mathcal{A},\text{ and }\mathcal{T}$ are like in a discounted MDP, and $\mathcal{R} \in \mathbb{R}\setminus\{0\}$ (to avoid the possibility of zero reward cycles, negative reward is usually associated to costs, and positive reward is usually given upon reaching the goal). Without loss of generality, there is a single goal state $s_G$ and a single action $a_G$ such that $\mathcal{R}(s_G, a_G) = 0$ and $\mathcal{T}(s_G|s_G, a_G) = 1$. Furthermore, there has to exist at least one proper policy, and for every improper stationary policy $\pi$, its value function must be $-\infty$ for at least a state where it is improper. A policy is said proper if it reaches the goal with probability $1$ (as the number of steps goes to infinity) from any state $s\in\mathcal{S}$. A policy that is not proper is said improper. The requirement that at least a proper policy has to exist has to do with the convergence of the expected returns. The fact that improper policies must have at least a state where their value function is infinitely negative is needed for solution algorithms to discard them.
The existence of a proper policy depends on the transition function of the MDP, hence, transforming the transition function of a SSP through abstraction mechanisms as they are designed for discounted MDPs, may not guarantee that at least a proper policy will be preserved in general. Therefore, we may go outside the SSP class of problems. Investigating under what conditions WFA, ARMDPs, and ABPMDPs are closed abstractions under the SSP framework is outside the scope of this paper and avenue for future work. The abstraction we discussed in the previous section falls within the family of SSPs (if we allow zero reward functions, then into the Generalized SSPs~\cite{klossner2021ssp}). This implies that there exist some ways of abstracting an SSP via ARMDPs and ABPMDPs that still result in an SSP.

\section{Conclusions}\label{sec:conclusions}
In this paper, we have unified projection abstractions in (Classical / Probabilistic) Planning and discounted MDPs showing how to obtain abstractions in discounted MDPs that are equivalent to the Planning ones. We have analyzed the role that the absence of conditional effects has onto abstractions in general. We have also highlighted the need for new efficient approaches able to (partially) remove this assumption at the benefit of both Planning and MDPs. In the context of the SSP framework, we emphasized the necessity for new theoretical studies aiming to find out under what conditions WFAs, ARMDPs, and ABPMDPs still generate abstractions that fall within the SSP family. All of this constitutes a first starting point to bring Planning and RL closer together under the concept of abstractions.

\section*{Disclaimer}
This paper was prepared for informational purposes  by the Artificial Intelligence Research group of JPMorgan Chase \& Co. and its affiliates ("JP Morgan'') and is not a product of the Research Department of JP Morgan. JP Morgan makes no representation and warranty whatsoever and disclaims all liability, for the completeness, accuracy or reliability of the information contained herein. This document is not intended as investment research or investment advice, or a recommendation, offer or solicitation for the purchase or sale of any security, financial instrument, financial product or service, or to be used in any way for evaluating the merits of participating in any transaction, and shall not constitute a solicitation under any jurisdiction or to any person, if such solicitation under such jurisdiction or to such person would be unlawful.

© 2024 JPMorgan Chase \& Co. All rights reserved.
\bibliography{aaai25}

\end{document}